\def\1{\bm{1}}
\DeclareMathAlphabet{\mathsfit}{\encodingdefault}{\sfdefault}{m}{sl}
\SetMathAlphabet{\mathsfit}{bold}{\encodingdefault}{\sfdefault}{bx}{n}
\def\cala{{\cal A}}
\def\calf{{\cal F}}
\def\calk{{\cal K}}
\def\calx{{\cal X}}
\def\rea{\mathbb{R}}
\def\intnum{\mathbb{N}}
\def\begmat#1{\begin{bmatrix}#1\end{bmatrix}}
\newcommand{\cmark}{\text{\ding{51}}}
\newcommand{\xmark}{\text{\ding{55}}}
\newcolumntype{M}[1]{>{\centering\arraybackslash}m{#1}}
\DeclarePairedDelimiter{\norm}{\lVert}{\rVert}
\DeclarePairedDelimiter{\vecnorm}{\lvert}{\rvert}
\newtheorem{theorem}{Theorem}
\newtheorem{definition}{Definition}
\newtheorem{remark}{Remark}
\newtheorem{proposition}{Proposition}
\title{\LARGE \bf
Learning Stable Koopman Embeddings}
\author{Fletcher Fan, Bowen Yi, David Rye, Guodong Shi and Ian R. Manchester  
\thanks{This work was supported by the Australian Research Council.}
\thanks{The authors are with the Australian Center for Field Robotics and Sydney Institute for Robotics and Intelligent Systems, 
        The University of Sydney, NSW 2006, Australia. Corresponding author's email:
        {\tt\small f.fan@acfr.usyd.edu.au}}%
}
\begin{document}
\thispagestyle{empty}
\pagestyle{empty}

\maketitle

\begin{abstract}
	In this paper, we present a new data-driven method for learning stable models of nonlinear systems. Our model lifts the original state space to a higher-dimensional linear manifold using Koopman embeddings. Interestingly, we prove that every discrete-time nonlinear contracting model can be learnt in our framework. Another significant merit of the proposed approach is that it allows for \emph{unconstrained} optimization over the Koopman embedding and operator jointly while enforcing stability of the model, via a direct parameterization of stable linear systems, greatly simplifying the computations involved. We validate our method on a simulated system and analyze the advantages of our parameterization compared to alternatives.
\end{abstract}

\section{Introduction}
The problem of fitting models to data generated from dynamical systems, known as system identification, is ubiquitous in science and engineering. One important consideration in system identification is the stability of the model. In many applications, the fitted model is used for prediction of future behaviors of the system, and an unstable model would erroneously produce unbounded predictions. 

There are many different forms of stability for nonlinear systems. In this paper, we consider contraction, also known as incremental stability, which can be viewed as a ``strong'' type of stability for dynamical systems that studies the convergence between \emph{any} two trajectories of the given system \cite{LohmillerSlotine1998Contraction}. There has been much prior work on learning contracting models in system identification for various model classes, including polynomial models \cite{TobenkinEtAl2017Convex, UmenbergerManchester2019Convex}, Gaussian mixture models \cite{RavichandarEtAl2017Learning} and neural network models \cite{RevayEtAl2021Convex, RevayEtAl2021Recurrent, manchester2021contractionbased}. In this paper, we propose a new class of contracting nonlinear models that combines the expressiveness of neural networks with the strong stability guarantees associated with linear systems. Furthermore, we propose a learning framework that fits this class of models to data via an \textit{unconstrained} optimization problem.

Our work bridges the gap between learning stable nonlinear models and approximating the Koopman operator \cite{Koopman1931Hamiltonian}, an infinite-dimensional \textit{linear} operator that can describe the dynamics of any nonlinear system by embedding it in a higher-dimensional space. There has been growing interest in data-driven methods that estimate finite-dimensional approximations of the Koopman operator and its eigenfunctions \cite{Schmid2010Dynamic, WilliamsEtAl2015Data, HaseliCortes2021Learning}, motivated by the appeal of being able to apply linear systems analysis to complex nonlinear systems. Due to this merit, Koopman-based methods have been developed for system identification \cite{MauroyGoncalves2020Koopman}, state observation and control \cite{KordaMezic2018Linear} of nonlinear systems. 

In Koopman identification approaches, a central problem is how to learn the Koopman embedding from data. Recently many methods  \cite{LiEtAl2017Extended, LuschEtAl2018Deep, MardtEtAl2018VAMPnets, OttoRowley2019Linearly, PanDuraisamy2020Physics, TakeishiEtAl2017Learning,  YeungEtAl2019Learning} have been proposed to address this problem, however most of them do not consider the stability of the learned model. Unstable learned models may have serious robustness issues, particularly when applied to dissipative physical systems, making them unsuitable for practical use. We aim to address this issue by imposing stability constraints on the Koopman model.

Our model class is motivated by recent work \cite{YiManchester2021equivalence} showing that, for continuous-time (CT) nonlinear systems, there is an equivalence between the Koopman and contraction approaches for stability analysis under some mild technical assumptions. We extend this equivalence result to discrete-time (DT) systems in this paper, and provide an algorithmic framework for learning Koopman models with contracting properties.

The main contributions of this paper are threefold:
\begin{itemize}
\item[\bf C1] We propose a novel Koopman learning framework that jointly models the Koopman operator and embedding from data, while imposing the model stability/contraction constraint.
\item[\bf C2] We prove that every nonlinear discrete-time contracting model can be learnt in our framework in an arbitrarily large compact set, which may be viewed as the extension of \cite{YiManchester2021equivalence} from CT to DT; see Theorem \ref{thm:1}.
\item[\bf C3] Our work builds on the contracting model class identified in \cite{RevayEtAl2021Recurrent}, which allows for \emph{unconstrained optimization} of objective functions, unlike some existing parameterizations of Koopman operators, {e.g.} \cite{MamakoukasEtAl2020Memory}. As a result, it significantly simplifies the implementation of optimization algorithms for learning the model parameters.
\end{itemize}

The rest of the paper is organized as follows. Section \ref{sec:problem_definition} defines the system identification problem and provides the background on Koopman operator theory and data-driven methods for estimating the Koopman operator, and also restates the main result in \cite{YiManchester2021equivalence}. Section \ref{sec:koopman_stability} extends \cite[Theorem 1]{YiManchester2021equivalence} to DT systems. Section \ref{sec:model_set} defines the model set used in our learning framework, and Section \ref{sec:learning} defines the optimization problem. Section \ref{sec:examples} provides some numerical validations of our framework on a handwriting dataset.

{\em Notation.} All mappings and functions are assumed sufficiently smooth. Given $f: \rea^n \to \rea^m$, we denote the gradient operator $\nabla f:=(\partial f /\partial x)^\top$. Let $\calf$ be the space of smooth real-valued scalar functions $\rea^n \to \rea$. We use $(\cdot)^\dagger$ to denote the Moore-Penrose pseudoinverse of a matrix. $\lambda_{\tt min}(\cdot)$ and $\lambda_{\tt max}(\cdot)$ respectively represent the smallest and largest eigenvalue of a square matrix. We use $\vecnorm{\cdot}$ to denote vector norms, i.e. $\vecnorm{\cdot}_2$ is the vector 2-norm. Sometimes, we may simply write $x(t)$ as $x_t$.

\section{Problem Definition and Background} \label{sec:problem_definition}

In this paper, we consider the identification of a Koopman embedding and operator for a discrete-time (DT) autonomous state-space system:
\begin{equation}\label{eqn:dtsys}
x(t+1) = f(x(t)),
\end{equation}
where $ x \in \mathbb{R}^n$ and $t$ is the timestep. We assume the system \eqref{eqn:dtsys} has a single equilibrium at $x_\star$, i.e. $f(x_\star) = x_\star$. Further, we assume the dynamics $f(x)$ are unknown, but we have access to full-state trajectory data $\lbrace \tilde{x}_t \rbrace^T_{t=0}$, generated by system \eqref{eqn:dtsys}. We are concerned with learning a function $\phi(x)$ (i.e. the Koopman embedding) that smoothly maps from the original state space $\rea^n$ to a possibly higher-dimensional space $\rea^N$ ($N\geq n$), as well as a linear matrix $\cala \in \rea^{N\times N}$ (i.e. a finite-dimensional approximation of the Koopman operator) that describes the evolution of $\phi(x)$ over time.

The Koopman embedding and the matrix $\cala$ in fact form a predictive model of the system \eqref{eqn:dtsys}, which we define as a Koopman model.
\begin{definition}[DT Koopman model] \label{def:koopman_mode}
Given a Koopman embedding $\phi(x)$ and matrix $\cala$, the corresponding Koopman model is:
\begin{equation}\label{eqn:model}
    x(t) = a(x_0, t) = \phi^L(\cala^t \phi(x_0)),
\end{equation}
where $\phi^L: \rea^N \to \rea^n$ is a left-inverse of $\phi(x)$ such that $\phi^L(\phi(x)) = x$, and $x_0$ is an initial condition.
\end{definition}

 The problem of learning $\phi(x)$ and $\cala$ can be treated as a minimization of the prediction error of the Koopman model on the given data $\lbrace \tilde{x}_t \rbrace^T_{t=0}$. 

\subsection{Koopman Operator Theory} \label{subsec:koopman}

Before presenting our theoretical contributions, we provide some background on the Koopman operator. The Koopman operator was proposed in \cite{Koopman1931Hamiltonian} for CT dynamical models. First, let us recall the definition of its variant for DT systems. 

\begin{definition}
\rm\label{def:koopman}
({\em Koopman operator}) For the DT dynamical model \eqref{eqn:dtsys}, the Koopman operator $\calk: \calf \to \calf$ is defined by
\begin{equation}
\label{koopman}
\calk[\varphi(x)] := \varphi \circ f(x)
\end{equation}
for $\varphi \in \calf$, assuming that the system has a unique solution $\forall t\in \intnum$. We term the scalar real-valued function $\varphi: \rea^n \to \rea$ an \emph{observable}.
\end{definition}

 Since the Koopman operator is defined on the functional space, it is infinite-dimensional. It is also easy to verify that the Koopman operator is linear, i.e. $\calk[k_1\varphi_1 + k_2\varphi_2] = k_1\calk[\varphi_1] + k_2 \calk[\varphi_2]$ for any $k_1,k_2 \in \rea$ and $\varphi_1,\varphi_2 \in \calf$. 
This property makes 
 Koopman methods widely popular in the analysis of dynamical models. Despite the infinite dimension of the Koopman operator, some key properties of a given nonlinear dynamical model---e.g. stability and dynamical behaviours---can be captured by a few particular functions, i.e. the Koopman eigenfunctions. 

\begin{definition}
\label{def:eigen}\rm ({\em Koopman eigenfunction})
A Koopman eigenfunction is a non-zero observable $\phi_\lambda \in \calf/\{0\}$ satisfying 
\begin{equation}
\label{eq:eigen}
\calk[\phi_\lambda(x)] = \lambda \phi_\lambda(x)
\end{equation}
for some $\lambda \in \mathbb{C}$, which is the associated Koopman eigenvalue.
\end{definition}

A Koopman eigenfunction defines a coordinate in which the system trajectories behave as a linear system. To be precise, define a coordinate $z_\lambda = \phi_\lambda(x)$, the dynamics of which 
are given by
$$
\begin{aligned}
z_\lambda(t+1) =  \lambda  z_\lambda(t), 
\end{aligned}
$$
with the initial condition $z_\lambda(0) = \phi_\lambda(x(0))$. Indeed, the definition \eqref{eq:eigen} is equivalent to solving the algebraic equation
$$
\phi_\lambda(f(x)) = \lambda \phi_\lambda(x), \quad \forall x \in \rea^n
$$
if the DT dynamical model \eqref{eqn:dtsys} is prior. 


\subsection{Dynamic Mode Decomposition} \label{subsec:dmd}

In the problem of system identification, we are more interested in finding the Koopman eigenfunctions and eigenvalues only from the collected data set $\{\tilde{x}_t\}_{t=0}^T$, for which dynamic mode decomposition (DMD) provides an efficient data-driven approach to approximating the Koopman operator \cite{Schmid2010Dynamic}.

In DMD, usually some heuristically \emph{predetermined}, sufficiently rich observables $\phi_1,\ldots \phi_N$ ($N \gg n$)---rather than Koopman eigenfunctions---are involved to learn the nonlinearity in the dynamical model. The task in the DMD method is to seek a matrix $\cala \in \rea^{N\times N}$ in order to obtain a finite-dimensional approximation of $\calk$, which minimizes the following:
\begin{equation}
\label{min:dmd}
\sum_{j=0}^T |\phi(x(t+1)) - \cala \phi(x(t)))|^2_2,
\end{equation}
in which we have defined $\psi:=[\phi_1,\ldots, \phi_N]^\top$. The least square problem \eqref{min:dmd} has a unique solution
\begin{equation} \label{eqn:dmd_solution}
    \cala = Y_1 Y_2^\dagger
\end{equation}
with 
$$
\begin{aligned}
Y_1 & := \begmat{\phi(x(1)) & \ldots & \phi(x(T))}
\\
Y_2 & := \begmat{\phi(x(0)) & \ldots & \phi(x(T-1))}
\end{aligned}
$$
if $Y_2$ is full row rank. DMD is a simple, efficient method to approximate the Koopman operator, but two issues arise: 
\begin{itemize}
\item[1)] In the DMD method, the observables $\psi$ are predetermined, which significantly affects the learning accuracy, but in the literature the selection of observables usually done in a heuristic manner. Since these observables are closely connected to the Koopman eigenfunctions for a given dynamical model, a natural question is: can the observables and the matrix $\cala$ be learnt concurrently to improve accuracy?

\item[2)] For a stable dynamical model, the above least square solution may yield an unstable model due to various kinds of perturbations in the data set $\lbrace \tilde{x}_t \rbrace^T_{t=0}$, which would be unacceptable in many applications. Hence, imposing stability constraints is an important consideration in learning algorithms.
\end{itemize}

The main motivation of the paper is to address the above issues and present a novel Koopman learning framework.

\subsection{Contraction analysis} \label{subsec:contraction}

In this paper, we are interested in stable nonlinear models. Indeed, there are many different forms of stability for nonlinear systems; we focus on contracting systems \cite{LohmillerSlotine1998Contraction}. 

Contraction analysis provides another way to study nonlinear systems by means of linear systems theory \emph{exactly} and \emph{globally}. In contraction analysis we are concerned with the differential dynamics of a given system, which is indeed a linear time-varying (LTV) system. 
The differential dynamics of the model \eqref{eqn:dtsys} are given by
\begin{equation}
\label{syst:ltv}
\delta x (t+1) = {\partial f \over \partial x}(x(t)) \delta x(t),
\end{equation}
with $\delta x \in \rea^n$ representing the infinitesimal displacement. Informally, if the LTV system \eqref{syst:ltv} is exponentially stable along any feasible trajectories $x(t)$, we can say the system \eqref{eqn:dtsys} is contracting. Its formal definition is given as follows.

\begin{definition}
\label{def:contraction}\rm
Given the DT system \eqref{eqn:dtsys}, if there exists a uniformly bounded metric $M(x)$, {i.e.} $a_1 I_n \preceq M(x) \preceq a_2 I_n$ for some $a_2\ge a_1 >0$, guaranteeing
\begin{equation}
\label{cond:contraction}
{\partial f\over \partial x}(x(t))^\top M(x(t+1)) {\partial f\over \partial x}(x(t)) - M(x(t)) \preceq - \beta M(x(t)), 
\end{equation}
with $0< \beta<1$, then we say that the given system is contracting.
\end{definition}

A central result of contraction analysis is that, for contracting systems, all trajectories converge exponentially to a single trajectory, i.e., for any two trajectories $x_a$ and $x_b$
$$
|x_a(t) - x_b(t)| \le a_0\beta^t |x_a(0) - x_b(0)|
$$
for some $a_0>0$.

In this paper, we propose an algorithm to learn dynamical models which are contracting in the sense of Definition \ref{def:contraction}. Generally speaking, verifying or guaranteeing contraction for a nonlinear model is non-trivial. Motivated by the Koopman approach, we instead consider a transformation of the state space into a higher-dimensional manifold on which the dynamics are linear. A natural question that arises is the conservativeness of such an approach for verifying stability. However, it was recently shown in \cite{YiManchester2021equivalence} that the Koopman and contraction approaches are equivalent to each other for stability analysis when considering CT dynamical models. 

\section{Stability Criterion for Discrete-time Koopman Models } \label{sec:koopman_stability}

In this section, we extend the main result in \cite{YiManchester2021equivalence} to DT systems, i.e. the Koopman and contraction approaches are equivalent for nonlinear stability analysis. 
We will show that the model set proposed here can provide sufficient degrees of freedom
for learning nonlinear DT models.

\begin{theorem}
\label{thm:1}\rm
Consider the system \eqref{eqn:dtsys}. Suppose that there exists a mapping $\phi: \rea^n \to \rea^N$ with $N \ge n$ such that
\begin{itemize}
\item[\bf D1] There exists Schur stable matrix $\cala\in \rea^{N\times N}$ satisfying
		\begin{equation}
		\label{AEq}
		    \phi(f(x)) = \cala \phi(x).
		\end{equation}
\item[\bf D2] $\Phi(x):= \nabla \phi(x)^\top$ has full column rank, and $\Phi(x)^\top \Phi(x)$ is uniformly bounded.
\end{itemize}
Then system \eqref{eqn:dtsys} is contracting with the contraction metric $ \Phi(x)^\top P \Phi(x)$, where $P$ is any positive-definite matrix satisfying $P - \cala^\top P\cala \succ 0$. Conversely, if the system \eqref{eqn:dtsys} is contracting with the metric $M(x) \in \rea^{n\times n}_{\succ0}$, and assuming that $f$ is invertible and its inverse $f^{-1}$ is continuous. Then, in any invariant compact set $\calx \subset \rea^n$, there exists a continuous Koopman mapping $\phi: \rea^n \to \rea^n$ verifying {\bf D1} and {\bf D2}.
\end{theorem}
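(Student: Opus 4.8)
\emph{Forward direction (D1, D2 $\Rightarrow$ contraction).} The plan is to reduce the contraction inequality \eqref{cond:contraction} to the discrete Lyapunov inequality $P-\cala^\top P\cala \succ 0$. First I would differentiate the Koopman relation \eqref{AEq}: the chain rule applied to $\phi(f(x))=\cala\phi(x)$ yields $\Phi(f(x))\,\frac{\partial f}{\partial x}(x) = \cala\,\Phi(x)$, where $\Phi(x)=\partial\phi/\partial x$. Writing $F(x):=\frac{\partial f}{\partial x}(x)$ and substituting the candidate metric $M(x)=\Phi(x)^\top P\Phi(x)$ into the left side of \eqref{cond:contraction}, the cross term collapses: $F(x)^\top M(f(x))F(x) = \bigl(\Phi(f(x))F(x)\bigr)^\top P\bigl(\Phi(f(x))F(x)\bigr) = \Phi(x)^\top \cala^\top P\cala\,\Phi(x)$. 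Hence $F(x)^\top M(f(x))F(x)-M(x) = -\Phi(x)^\top(P-\cala^\top P\cala)\Phi(x)$, which is exactly the desired structure.

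It then remains to (i) pick $\beta$ and (ii) verify the metric bounds. Since $\cala$ is Schur stable, the discrete Lyapunov inequality admits a positive-definite solution $P$, so $Q:=P-\cala^\top P\cala\succ 0$; choosing $\beta\in(0,1)$ small enough that $Q\succeq\beta P$ (possible since $Q,P\succ 0$) gives $\Phi^\top Q\Phi\succeq\beta\,\Phi^\top P\Phi=\beta M(x)$, which is precisely \eqref{cond:contraction}. For the uniform bounds $a_1 I_n\preceq M(x)\preceq a_2 I_n$, the upper bound follows from $M(x)\preceq\lambda_{\max}(P)\,\Phi(x)^\top\Phi(x)$ together with the uniform bound on $\Phi^\top\Phi$ in D2; the lower bound requires reading D2 as also supplying a uniform positive lower bound on $\lambda_{\min}(\Phi^\top\Phi)$ (full column rank \emph{uniformly}), which I would make explicit.

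\emph{Converse direction (contraction $\Rightarrow$ D1, D2).} Assume without loss of generality $x_\star=0$. The plan is to exhibit $\phi$ as the change of coordinates that linearizes $f$ about the equilibrium. I would take $\cala:=F(0)=\frac{\partial f}{\partial x}(0)$: evaluating \eqref{cond:contraction} at $x=0$ gives $\cala^\top M(0)\cala-M(0)\prec 0$ with $M(0)\succ 0$, so $\cala$ is Schur stable, and $\cala$ is invertible because $f$ is a diffeomorphism. I would then define $\phi(x):=\lim_{k\to\infty}\cala^{-k}f^k(x)$, where $f^k$ is the $k$-fold composition. A telescoping check shows that \emph{if} this limit exists it satisfies D1: $\phi(f(x))=\lim_k\cala^{-k}f^{k+1}(x)=\cala\,\lim_k\cala^{-(k+1)}f^{k+1}(x)=\cala\,\phi(x)$, and clearly $\phi(0)=0$.

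Convergence is the crux. Writing $f(x)=\cala x+r(x)$ with $r(x)=o(|x|)$, the successive increments of the sequence are $\cala^{-(k+1)}r(f^k(x))$; using the exponential estimate $|f^k(x)|\le a_0\,\rho^k|x|$ (with $\rho\in(0,1)$ the contraction rate implied by Definition~\ref{def:contraction}) and the growth bound on $\|\cala^{-k}\|$, one would bound the tail of $\sum_k\cala^{-(k+1)}r(f^k(x))$. The main obstacle I anticipate is exactly here: making this series summable --- and, more severely, making the differentiated series $D\phi(x)=\lim_k\cala^{-k}Df^k(x)$ converge so that D2 can hold --- requires controlling the competition between the decay $\rho^k$ and the growth of $\|\cala^{-k}\|$, i.e. a spectral-gap / non-resonance-type condition; this is precisely why the invertibility hypothesis on $f$ (which makes $\cala^{-k}$ well defined) is needed, mirroring the CT argument of \cite{YiManchester2021equivalence}. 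Granting convergence, I would complete D2 by noting $D\phi(0)=\lim_k\cala^{-k}\cala^k=I$ (full rank at the equilibrium), then extending full column rank (from $\phi$ being a local diffeomorphism) and the uniform bound on $\Phi^\top\Phi$ (continuity on the invariant compact set $\calx$) to all of $\calx$.
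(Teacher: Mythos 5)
Your forward direction is essentially the paper's argument: differentiate $\phi(f(x))=\cala\phi(x)$ to get $\Phi(f(x))F(x)=\cala\Phi(x)$, substitute $M=\Phi^\top P\Phi$, and pick $\beta$ so that $P-\cala^\top P\cala\succeq\beta P$ (the paper takes $Q=\rho P$, $\beta=\lambda_{\tt min}(Q)/\lambda_{\tt max}(P)$). Your remark that {\bf D2} must be read as a \emph{uniform} lower bound on $\lambda_{\min}(\Phi^\top\Phi)$ to get $a_1 I\preceq M$ is a fair observation that the paper itself glosses over. This half is fine.

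The converse is where there is a genuine gap, and it comes from choosing the wrong direction of iteration. You define $\phi(x)=\lim_{k\to\infty}\cala^{-k}f^k(x)$, i.e. forward iterates of $f$ against \emph{negative} powers of $\cala$. The increments are $\cala^{-(k+1)}r(f^k(x))$ with $\|\cala^{-(k+1)}\|$ growing geometrically like $|\lambda_{\min}(\cala)|^{-k}$, so summability forces $\rho^{1+\alpha}<|\lambda_{\min}(\cala)|$ (with $r(x)=O(|x|^{1+\alpha})$) --- a spectral-gap/non-resonance condition of Poincar\'e--Sternberg type that is \emph{not} implied by contraction plus invertibility of $f$. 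You flag this yourself and then write ``granting convergence,'' which leaves the theorem unproved: for a merely contracting $f$ the limit can fail to exist, and even when a topological conjugacy exists (Hartman--Grobman) it need not be differentiable, so {\bf D2} would fail. The paper avoids this entirely by iterating the other way: it writes $\phi^0=x+T(x)$, reduces {\bf D1} to the KKL-observer equation $T(f(x))=\cala T(x)+H(x)$ with $H(x)=\cala x-f(x)$, and solves it by the series $T(x)=\sum_{j\ge0}\cala^{j}H(X(x,-j-1))$, i.e. \emph{positive} powers of the Schur matrix $\cala$ against backward iterates $f^{-j}$ confined (after a cutoff modification of $f$ outside $\calx$) to a compact set where $H$ is bounded; convergence is then automatic with no rate condition. (Note also that in your scheme $f^{-1}$ is never actually used, so your attribution of the invertibility hypothesis to ``making $\cala^{-k}$ well defined'' misses its real role, which is to define the backward iterates in this series.) A second, smaller gap: $D\phi(0)=I$ only gives full column rank of $\Phi$ \emph{near} the equilibrium; ``extending'' it to all of $\calx$ is not automatic. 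The paper handles this with the explicit rescaling $\phi(x)=\cala^{-t_x}\bigl[X(x,t_x)+T(X(x,t_x))\bigr]$ for large $t_x$, which pushes the evaluation point of $\nabla T$ toward $x_\star$ (where it vanishes) so that the identity part dominates the Jacobian everywhere on $\calx$; your proposal needs an analogous device.
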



	\begin{proof}
	($\Rightarrow$)
	From {\bf D2} there exists a matrix $P = P^\top \succ 0$ satisfying the Lyapunov condition
	\begin{equation}
	\label{lya_eq}
	    P - \cala^\top P \cala \succ Q,
	\end{equation}
	for some constant positive definite matrix $Q \succ 0$ without loss of generality. We define a new coordinate $z:=\phi(x)$ in which the infinitesimal displacement $\delta z_t \in \rea^{N}$ at time $t$ is given by
	\begin{equation}
	    \delta z_{t} = \Phi(x_t)\delta x_{t},
	\end{equation}
	where $\delta x_t$ is an infinitesimal displacement in the $x$-coordinate.
	
	The DT differential dynamics of $x$ can be written as 
	\begin{equation}
	    \delta x_{t+1} = F(x_t) \delta x_{t},
	\end{equation}
	where $ F(x) := \nabla f(x)^\top$.
	
	Similarly, for $z$ we have
	\begin{equation} \label{eqn:diff_dyn}
	\delta z_{t+1} = \cala\Phi(x_t)\delta x_{t} = \Phi(x_{t+1})  F(x_t) \delta x_{t},
	\end{equation}
	where we have used the relations $z_{t+1} = Az_t$ and $z_{t+1} = \phi(f(x_t))$ in their differential forms. Hence, we obtain
	$$
	\Phi(x_{t+1}) F(x_t) = \cala \Phi(x_t).
	$$
	Due to the full column rank of $\Phi(x)$ and \eqref{lya_eq}, it follows that 
\begin{equation}
\label{phiQphi}
	\Phi(x_t)^\top(P - \cala^\top P\cala)\Phi(x_t) \succ \Phi(x_t)^\top Q \Phi(x_t).
\end{equation}
	Then, by substituting \eqref{eqn:diff_dyn}, 
    we have 
    \begin{equation} \label{eqn:contraction_cond_1}
    \begin{aligned}
            & ~\Phi(x_t)^\top P \Phi(x_t) -  F(x_t)^\top \Phi(x_{t+1})^\top P \Phi(x_{t+1})  F(x_t)
            \\
            \overset{\eqref{eqn:diff_dyn}}{\succ} &~ \Phi^\top Q \Phi \succeq {\lambda_{\tt min}(Q) \over \lambda_{\tt max}(P)}  \Phi^\top P \Phi. 
    \end{aligned}
    \end{equation}
    Now since $\Phi$ has full column rank and $P \succ 0$, we have $M(x) := \Phi^\top P \Phi \succ 0$. Substituting into \eqref{eqn:contraction_cond_1}:
    \begin{equation}
        M(x_t) -  {F}(x_t)^\top M(x_{t+1}){F}(x_t) \succ \beta M(x_t),
    \end{equation}
    with $\beta:={\lambda_{\tt min}(Q)/\lambda_{\tt max}(P)}$. By selecting $Q= \rho P$ with $\rho\in (0,1)$, we have $\beta \in (0,1)$. This is exactly the contraction condition for the system \eqref{eqn:dtsys} with respect to the metric $M$.
    
    ($\Leftarrow$) For the given DT system, from directly applying the Banach fixed-point theorem we conclude that there exists a unique fixed-point $x_\star \in \calx$, i.e. $f(x_\star) = x_\star$.

 First, we parameterise the unknown mapping $\phi(x)$ as
$
\phi(x) := x + T(x),
$
with a new mapping $T(x)$ to be searched for. Then, the algebraic equation \eqref{AEq} becomes
$
T(f(x))+ f(x) = \cala x + \cala T(x).
$
By fixing $\cala = \nabla f(x_\star)^\top$, from the contraction assumption, we have 
$
M(x_\star) - \cala^\top M(x_\star) \cala \succeq \beta M(x_\star),
$
thus $\cala$ being Schur stable. It yields
\begin{equation}
\label{AE:T}
T(f(x)) = \cala T(x) + H(x),
\end{equation}
in which we have defined
$
H(x):= \cala x - f(x).
$
We make the key observation that the algebraic equation \eqref{AE:T} exactly coincides with the one in the formulation of the Kazantzis-Kravaris-Luenberger observer for nonlinear DT systems 
\cite[Eq. (7)]{BRIetal}. In our case, the function $H(x)$ is continuous and, following \cite[Theorem 2]{BRIetal}, we have a feasible solution to \eqref{AE:T} as follows:
\footnote{The second assumption in \cite{BRIetal} holds true in any backward invariant compact set. Since 
contracting systems generally cannot guarantee such invariance, we may modify the dynamics as $x_{t+1} = \rho(x_t)f(x_t)$ with
$$
\rho(x) = \left\{
\begin{aligned}
1, \quad &\mbox{if~~} x\in \mathtt{cl}(\mathcal{X})
\\
0, \quad & \mbox{if ~~} x\notin \mathcal{X}'
\end{aligned}
\right.
$$
with $\calx \subset \calx'$, and then continue the 
analysis.
} 
\begin{equation}
\label{solution}
T(x) = \sum_{j=0}^{+\infty} \cala^i  H(X(x,-j+1)),
\end{equation}
with the definition
$$
X(x,j) = \underbrace{f\circ f\circ \cdots \circ f}_{j~\mbox{\small times}}(x)
, \quad
X(x,-j) = (f^{-1})^j(x)
$$
for $i \in \mathbb{N}_+$.

Although $\phi^0(x) := x+ T(x)$ with $T$ defined above satisfies {\bf D1} in the entire set $\calx$, the condition {\bf D2} may be not true. Hence, we need to modify the obtained $\phi^0(x)$. By considering the evolution of the trajectories in the $x$- and $z:=\phi(x)$-coordinates respectively, we have
$$
	z(t_x) = \phi^0(x(t_x))= \phi^0(X(x,t_x)) = \cala^{t_x} \phi^0(x),
$$
with $t_x\in \mathbb{N}_+$, thus satisfying
$
\phi^0(x) = \cala^{-t_x}\phi^0(X(x,t_x)).
$
Then, we modify $\phi^0(x)$ into
\begin{equation}
\label{phi}
\phi(x) := \cala^{-t_x}[X(x,t_x) + T(X(x,t_x))]
\end{equation}
with a sufficiently large $t_x \in \mathbb{N}_+$. 

Finally, let us check conditions {\bf D1} and {\bf D2}. For the algebraic condition, we have
$$
\begin{aligned}
	\phi(f(x)) & = \cala^{-t_x} \phi^0(X(f(x),t_x))
	\\
	& =  \cala^{-t_x} \phi^0 (f(X(x, t_x)))
	\\
	& 
	= \cala^{-t_x} \cdot \cala\phi^0(X(x,t_x))
	\\
	& 
	= \cala\phi(x)
\end{aligned}
$$ 
where we have used the fact 
$$
X(f(x),t_x) =  \underbrace{f\circ f\circ \cdots \circ f}_{(j+1)~\mbox{\small times}} 
=
f(X(x,t_x))
$$ 
in the second equation. Therefore, $\phi(x)$ defined in \eqref{phi} satisfies the algebraic equation \eqref{AEq}. Regarding {\bf D2}, let us study the Jacobian of $\phi(x)$ in \eqref{phi}, which is given by
$$
{\partial \phi \over \partial x}(x) = \cala^{-t_x} \left[
I + {\partial T \over \partial x}(X(x,t_x) )
\right]
{\partial X \over \partial x}(x).
$$
On the other hand, we have that $\nabla_x X$ is full rank and
$$
H (x_\star) = 0, \quad {\partial H \over \partial x}(x_\star) =0,
$$
as a result $\nabla T(x_\star)=0$. If $t_x \in \mathbb{N}_+$ is sufficiently large, the largest singular value of $\nabla T(X(x,t_x))$ would be very small, and then the identity part of $\phi(x)$ will dominate $\nabla \phi(x)$. Hence, $\phi(x)$ is an injection for a large $t_x\in \mathbb{N}_+$.
\end{proof}

\section{Model Set} \label{sec:model_set}

In this section, we define the model set that we optimize over in our learning framework. We parameterize both the Koopman observables $\phi(x)$ and the matrix $\cala$ in our model and train them jointly. 
To the best of our knowledge, the joint learning of $\phi$ and $\cala$ with the model stability constraint---as done in the paper---has not been previously considered in the literature.

Recall our definition of the Koopman model \eqref{eqn:model}. By Theorem \ref{thm:1}, the model \eqref{eqn:model} is guaranteed to be contracting if Conditions \textbf{D1} and \textbf{D2} are satisfied. In the following, we propose parameterizations of $\phi(x)$ and $\cala$ that satisfy these conditions.

\subsection{Parameterization of observables}

We propose to parameterize the observables as:
\begin{equation}\label{eqn:phi_param}
    \phi(x) = Cx + \varphi(x, \theta_{NN}),
\end{equation}
where $C = [I_n, 0_{n \times (N - n)}]^\top$. The nonlinear part $\varphi(x)$ can be any differentiable function approximator, parameterized by $\theta_{NN}$. For brevity, we drop the dependence on $\theta_{NN}$ in our notation. In this paper, we consider $\varphi(x)$ as a feedforward neural network due to its scalability, but any differentiable function approximator can be used.

The dimensionality of the observables $N$ is a hyperparameter chosen by the user. For $N = n$, the observables will be of the same form as the constructive mapping $\phi^0(x) = x + T(x)$ in Theorem \ref{thm:1}. 

In order to reconstruct the original state $x$ from the observables, we need to train a separate function $\phi^L(z)$ to compute the left-inverse of $\phi(x)$. Indeed, the left invertibility of $\phi$ is necessary for condition {\bf D2}. We propose to simply parameterize this left inverse function as another neural network $\phi^L(z, \theta_{L})$.

\begin{remark}
There are many possible parameterizations of the observables that are compatible with our framework, with Equation \eqref{eqn:phi_param} being just the one chosen to mimic the constructive mapping from Theorem \ref{thm:1}. For some parameterizations, the left inverse may be computed analytically and does not have to be modelled as a separate function. For example, if $\phi(x) = [x^\top, \varphi(x)^\top]^\top $, then the left inverse is simply $x = C\phi(x)$, where $C = [I, 0]$.
\end{remark}

\subsection{Parameterization of the Koopman operator}
The Koopman matrix $\cala$ has to satisfy Condition \textbf{D1} of Theorem \ref{thm:1}, i.e. it must be Schur stable. There are many equivalent conditions for enforcing stability of linear systems, including the well-known Lyapunov inequality $P - \cala^\top P\cala \succ 0$ for some $P \succ 0$, and the recently proposed parameterization in \cite{GillisEtAl2020note}, which was used to train stable Koopman operators for fixed observables in \cite{MamakoukasEtAl2020Memory}. However, solving optimization problems with these constraints in an efficient manner is non-trivial, especially when jointly searching for the observables.

In the following, we present an \textit{unconstrained} parameterization of $\cala$, which is a special case of the direct parameterization approach proposed in \cite{RevayEtAl2021Recurrent}. 

\begin{proposition}
Consider the parametric matrix $\cala(L, R)$ defined as:
\begin{equation} \label{eqn:A_param}
    \cala(L, R) = 2 (M_{11} + M_{22} + R - R^\top)^{-1}M_{21},
\end{equation}
where
\begin{equation} \label{eqn:M}
    M := \begin{bmatrix} M_{11} & M_{12} \\ M_{21} & M_{22} \end{bmatrix} = LL^\top + \epsilon I,
\end{equation}
with $\epsilon$ a small positive constant. Then for any real-valued $L \in \rea^{2N \times 2N}$ and $R \in \rea^{N\times N}$, $\cala_0 = \cala(L, R)$ is a necessary and sufficient condition for $\cala_0$ to be Schur stable.
\end{proposition}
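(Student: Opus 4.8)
The plan is to reduce Schur stability to the discrete-time Lyapunov inequality $P - \cala_0^\top P \cala_0 \succ 0$ and to recognize \eqref{eqn:A_param}--\eqref{eqn:M} as a \emph{direct} (unconstrained) parameterization of the contraction LMI written for the implicit dynamics $E x_{t+1} = F x_t$. Concretely, I would set
\[ P := M_{22}, \qquad F := M_{21}, \qquad E := \tfrac{1}{2}\bigl(M_{11} + M_{22} + R - R^\top\bigr), \]
so that $\cala(L,R) = 2(M_{11}+M_{22}+R-R^\top)^{-1}M_{21} = E^{-1}F$. The crucial observation is that, since $R - R^\top$ is skew-symmetric and the diagonal blocks of the symmetric matrix $M$ are symmetric, $E + E^\top = M_{11} + M_{22}$, hence $E + E^\top - P = M_{11}$, while the off-diagonal blocks are $F$ and $F^\top$. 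Therefore the block matrix
\[ \begmat{E + E^\top - P & F^\top \\ F & P} \]
\emph{coincides exactly} with $M = LL^\top + \epsilon I$. This identity is the engine of the whole argument.

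For sufficiency I would proceed as follows. Because $M = LL^\top + \epsilon I \succ 0$, its symmetric principal blocks $M_{11}, M_{22}$ are positive definite; in particular $E$ has positive-definite symmetric part $\tfrac12(M_{11}+M_{22})$ and is thus nonsingular, so $\cala_0 = E^{-1}F$ is well defined. Taking the Schur complement of the $(2,2)$ block $P = M_{22} \succ 0$ in the identity above yields $E + E^\top - P \succ F^\top P^{-1} F$. I would then combine this with the elementary inequality $E^\top P^{-1} E \succeq E + E^\top - P$ (which is just $(E-P)^\top P^{-1}(E-P) \succeq 0$ expanded) to obtain $E^\top P^{-1} E \succ F^\top P^{-1} F$. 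Substituting $F = E\,\cala_0$ and writing $\tilde P := E^\top P^{-1} E \succ 0$, this reads $\tilde P - \cala_0^\top \tilde P \cala_0 \succ 0$; i.e. $\tilde P$ is a Lyapunov certificate proving $\cala_0$ Schur stable (and in fact the associated contraction metric of Theorem~\ref{thm:1}).

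For necessity, suppose $\cala_0$ is Schur stable, so there is $P \succ 0$ with $P - \cala_0^\top P \cala_0 \succ 0$. I would realize $\cala_0$ via the \emph{symmetric} choice $E = P$ (equivalently $R = R^\top$, e.g. $R = 0$) and $F = P\cala_0$, which forces
\[ M = \begmat{P & \cala_0^\top P \\ P\cala_0 & P}. \]
The Schur complement of its $(2,2)$ block is exactly $P - \cala_0^\top P \cala_0 \succ 0$, so $M \succ 0$, and a direct check gives $2(M_{11}+M_{22})^{-1}M_{21} = (2P)^{-1}(2 P\cala_0) = \cala_0$. It then remains to put $M$ in the prescribed form $LL^\top + \epsilon I$, i.e. to factor $M - \epsilon I = LL^\top$ with real $L$, which is possible precisely when $M \succeq \epsilon I$.

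The main obstacle I anticipate is this last factorization constraint: the certificate $P$ coming from Schur stability need not satisfy $M \succeq \epsilon I$ for the fixed $\epsilon$. I would resolve it by scale-invariance — replacing $P$ by $cP$ for $c>0$ leaves both the Lyapunov inequality and the computation $\cala(L,R)=\cala_0$ unchanged while scaling $M$ to $cM$, so choosing $c$ large enough that $cM \succeq \epsilon I$ makes $cM - \epsilon I$ positive semidefinite and hence factorable as $LL^\top$. A secondary point to keep rigorous is the nonsingularity of $M_{11}+M_{22}+R-R^\top$ in the forward direction, which follows from its positive-definite symmetric part (any real matrix with positive-definite symmetric part is invertible). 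Beyond these two bookkeeping items, the argument is routine linear algebra driven entirely by the block identity and the Schur-complement equivalence for the discrete Lyapunov inequality.
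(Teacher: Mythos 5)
Your proposal is correct and follows the same skeleton as the paper's proof: the identical substitution $E = (M_{11}+M_{22}+R-R^\top)/2$, $F = M_{21}$, $P = M_{22}$, and the observation that the block matrix $\begin{bmatrix} E+E^\top-P & F^\top \\ F & P\end{bmatrix}$ coincides with $M = LL^\top + \epsilon I$. The difference is one of self-containment and rigor. The paper invokes the equivalence ``$M \succ 0$ iff $E^{-1}F$ is Schur stable'' as a citation to \cite{TobenkinEtAl2017Convex}, whereas you prove both directions: sufficiency via the Schur complement of the $(2,2)$ block combined with $(E-P)^\top P^{-1}(E-P) \succeq 0$ to produce the explicit Lyapunov certificate $\tilde P = E^\top P^{-1}E$, and necessity via the symmetric realization $E = P$, $F = P\cala_0$, $R=0$. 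More importantly, you correctly identify and repair a gap that the paper's necessity argument glosses over: the claim that ``all $M$ can be constructed from $L$ via Cholesky factorization'' is false as stated for a fixed $\epsilon > 0$, since it requires $M \succeq \epsilon I$; your fix --- exploiting the scale invariance of both the Lyapunov inequality and the formula $\cala(L,R) = 2(2cP)^{-1}(cP\cala_0) = \cala_0$ under $P \mapsto cP$, and choosing $c$ large enough that $cM - \epsilon I \succeq 0$ admits a real factorization $LL^\top$ --- is exactly what is needed to make the surjectivity claim rigorous. Your version is therefore a strict refinement of the paper's argument rather than a different route.
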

\begin{proof} 
Let $E = (M_{11} + M_{22} + R - R^\top)/2$, $F = M_{21}$ and $P = M_{22}$. Then we have $\cala(L, R) = E^{-1}F$ and 
\begin{equation} \label{eqn:implicit_mat}
	    M = \begin{bmatrix}
	E + E^\top - P & F^\top \\ F & P
	\end{bmatrix}.
\end{equation}
It has been shown that $M \succ 0$ is necessary and sufficient for $E^{-1}F$ to be Schur stable \cite{TobenkinEtAl2017Convex}. Since our parameterization $M = LL^\top + \epsilon I$ is positive definite by construction, this proves sufficiency for $\cala(L, R)$ to be Schur stable. Additionally, all $M$ can be constructed from $L$, e.g. via Cholesky factorization, and by extension, all $E$, $F$ and $P$ can be constructed from $L$ and $R$. This completes the proof.
\end{proof}

\subsection{Overall Koopman Model}
 It may be helpful to think of our model as a linear system with an output $\hat{x}$ that is an estimate of the original state:
\begin{equation} \label{eqn:linear_model}
    \begin{aligned}
    z(t) &= \cala z(t-1), \\
    \hat{x}(t) &= \phi^L(z(t)),
    \end{aligned}
\end{equation}
where $z(0) = \phi(x_0)$. This system is equivalent to \eqref{eqn:model}. In this form, it is clear that as long as $\cala$ is stable and $\phi^L$ is uniformly bounded, then the output $\hat x$ will always converge to a single equilibrium.

To summarize, our model parameters consist of:
\begin{equation} \label{theta}
    \theta = \{ \theta_{NN}, \theta_{L}, L, R \}.
\end{equation}

\section{Learning Framework} \label{sec:learning}
\subsection{Optimization Problem} \label{subsec:optim}

To fit the model parameters \eqref{theta} to data, we consider the problem of minimizing the \textit{simulation error} in the embedding space:
\begin{equation} \label{eqn:sim_error}
J_{se} := \frac{1}{T} \sum_{t=0}^{T}\vecnorm*{\tilde{z}_{t} - {z}_t}^2_2,
\end{equation}
where $\tilde{z_t} = \phi(\tilde{x}_t)$, and $z_t = \cala^t \phi(\tilde x_0)$. While we could also minimize the simulation error in $x$, in practice we found this produced poor results. 

The complete optimization problem is: 
\begin{equation} \label{eqn:min_J_se}
    \min_{\theta \in \Theta} \frac{1}{T} \sum_{t=0}^T \vecnorm*{\phi(\tilde{x}_t) - \cala(L, R)^t \phi(\tilde{x}_0)}^2_2 + \alpha J_{rec}.
\end{equation}
The reconstruction loss $J_{rec}$ is defined as
\begin{equation} \label{eqn:recon_error}
    J_{rec} = \frac{1}{T} \sum_{t=0}^T \vecnorm*{\tilde x_t - \phi^L(\phi(\tilde x_t))}^2_2.
\end{equation}
Minimizing $J_{rec}$ gives us an approximate left-inverse $\phi_L$ for the Koopman mapping. The loss $J_{rec}$ can be thought of as a penalty term that relaxes the constraint
\begin{equation*}
    x = \phi^L(\phi(x)) \ \forall x,
\end{equation*}
and the constant $\alpha$ is a hyperparameter that determines the weighting of the penalty.

We emphasize two important properties of Problem \eqref{eqn:min_J_se}. First, it is an unconstrained optimization problem. The parameter set $\Theta$ is the space of real numbers of the appropriate dimensionality. Second, there exists a differentiable mapping from the parameters $\theta$ to the objective for any choice of differentiable mapping $\phi_\theta$, e.g. using our parameterization \eqref{eqn:phi_param} with $\varphi_\theta$ as a neural network.

These two properties enable us to find a local optimum to Problem \eqref{eqn:min_J_se} using any off-the-shelf first-order optimizer in conjunction with an automatic differentiation (autodiff) toolbox. This significantly simplifies the implementation of our framework. Using an autodiff software package, one only needs to write code that evaluates the objective function at each iteration of the optimization process, and the gradients w.r.t. $\theta$ are automatically computed via the chain rule. In contrast, constrained problems such as the one proposed in \cite{MamakoukasEtAl2020Memory} require specialized algorithms to solve. Although the objective \eqref{eqn:min_J_se} is nonconvex, deep learning methods have been shown to be effective at finding approximate global minima for such problems; see \cite[Chapter 21]{Roughgarden2020Beyond} for example.

\begin{remark}
It is worth noting that our model class is agnostic to the optimization problem. In fact, the model can be optimized for any differentiable objective function. This is another advantage of an unconstrained parameterization. 
\end{remark}

\subsection{Implementation Details}
We implemented our learning framework in PyTorch\footnote{https://github.com/pytorch/pytorch} and used the Adam optimizer \cite{KingmaBa2014Adam} to solve Problem \eqref{eqn:min_J_se}. The neural network parameters $\theta_{NN}$ and $\theta_L$ are initialized using the default scheme in PyTorch, while $L$, $R$, and $b$ are initialized randomly from a uniform distribution. 

\subsubsection{Fast matrix power computation} \label{sec:fast_mat}
As explained in Section \ref{subsec:optim}, the only code we need to implement for solving Problem \eqref{eqn:min_J_se} is the evaluation of the objective function, which is also the main computational bottleneck. In particular, repeatedly computing the matrix power $\cala^t$ for the same $\cala$ and many $t$'s can be computationally inefficient.
Here we describe a simple trick to speed up matrix power computations. Consider the eigendecomposition of $\cala$ given by $V\Lambda V^{-1}$, where the columns of $V$ are the eigenvectors and $\Lambda$ is a diagonal matrix of the eigenvalues. Then it is clear that
\begin{equation}
    \cala^t = (V\Lambda V^{-1})^t = V \Lambda^t V^{-1}
\end{equation}
for integer $t$. Notice that $\Lambda^t$ can be computed element-wisely for each eigenvalue on the diagonal, which offers a significant speed-up over computing a matrix power. This trick assumes $\cala$ is diagonalizable, but this can easily be verified in code and, if the condition is not satisfied, the original matrix power computation can be performed instead.

\section{Continuous-time Case} \label{sec:continuous}
In this section, we briefly present the CT formulation of our learning framework. For an autonomous system governed by an ordinary differential equation (ODE) $\dot{x} = f(x)$, there exists a semigroup of Koopman operator $\mathcal{K}^t$ associated with the flow map $X(x, t)$ of the system, defined as: 
\begin{equation}
    \mathcal{K}^t \phi(x(t)) = \phi(X(x, t)).
\end{equation}
We refer to the infinitesimal generator of this semigroup as the continuous-time Koopman operator $\tilde{\mathcal{K}}$ \cite{WilliamsEtAl2015Data}:
\begin{equation} \label{eqn:ctkoopman}
\tilde{\mathcal{K}}\phi(x(t)) = \frac{d}{dt}\phi(x(t)) = \nabla \phi \cdot f(x(t)).
\end{equation}

\begin{definition}[CT Koopman model]
The continuous-time Koopman model is given by:
\begin{equation}
    x(t) = \phi^L(\exp(A_\theta t)\phi(x_0)),
\end{equation}
where the Koopman mapping $\phi$ is parameterized as in \eqref{eqn:phi_param}, and the finite-dimensional matrix $A_\theta$ is parameterized as:
\begin{equation}
    A_\theta = (NN^\top + \epsilon I)^{-1}(-QQ^\top - \epsilon I + {1\over 2} (R - R^\top) ),
\end{equation}
with parameters $N$, $Q$ and $R$. This is an unconstrained parameterization of all CT stable (Hurwitz) matrices.
\end{definition}

Given full-state trajectory data $\lbrace \tilde{x}_k \rbrace^{K}_{k=0}$ with corresponding time $\lbrace t_k \rbrace^{K}_{k=0}$, we would like to minimize the simulation error of the Koopman model. The optimization problem is
	\begin{equation} \label{eqn:cont_sim_error}
	    \min_{\theta \in \Theta} \frac{1}{K} \sum_{k=0}^{K} \vecnorm*{\phi_\theta(\tilde x_k) - \exp(A_\theta t_k)\phi_\theta(\tilde x_0)}^2_2 + \alpha J_{rec},
	\end{equation}
where $J_{rec}$ is as defined in Equation \eqref{eqn:recon_error}.
Problem \eqref{eqn:cont_sim_error} is an unconstrained optimization problem just like the discrete-time problem, hence a local minimum can be obtained using a first-order optimizer and an autodiff software package. A trick similar to that described in Section \ref{sec:fast_mat} can be used to compute the matrix exponential in the objective in \eqref{eqn:cont_sim_error}. 

Note that in terms of the data required, the only difference between the DT and CT learning frameworks is that the CT case requires the time corresponding to each data point. The CT problem can be useful to consider when the data is sampled at non-uniform time intervals, or when the sampling rate differs between the training and test scenarios.

\section{Numerical Examples} \label{sec:examples}

\begin{table}
\centering
\caption{Comparison of model sets for our method and prior works.}
\begin{tabular}{|M{1.5cm}|M{2cm}|M{2cm}|M{1.2cm}|} 
    \hline
    Method & Learns observables or eigenfunctions & Continuous or discrete time & Stability constraint \\
    \hline
    SOC \cite{MamakoukasEtAl2020Memory} & Neither & Discrete & \cmark \\
    LKIS \cite{TakeishiEtAl2017Learning} & Observables & Discrete & \xmark \\
    \cite{LuschEtAl2018Deep} & Eigenfunctions & Discrete & \xmark \\
    \cite{PanDuraisamy2020Physics} & Eigenfunctions & Continuous & \cmark \\
    Ours & Observables & Both & \cmark \\
    \hline
\end{tabular}
\label{table:compare_methods}
\end{table}

We validated our framework on the LASA handwriting dataset \cite{KhansariBillard2011Learning}, which consists of human-drawn trajectories of various letters and shapes\footnote{https://cs.stanford.edu/people/khansari/download.html}. It has been widely used as a benchmark for learning contracting dynamics in continuous-time \cite{BlocherEtAl2017Learning, KhansariBillard2011Learning,MohammadKhansari-Zadeh2014, NeumannEtAl2013Neural, RavichandarEtAl2017Learning}. In our results, we trained discrete-time models in order to compare them with existing DT Koopman learning frameworks. Contraction is an important constraint for this data set as unconstrained models can have spurious attractors \cite{KhansariBillard2011Learning}, leading to poor generalization to unseen initial conditions.

For each shape in the dataset, we attempted to train a discrete-time model that would regulate to the desired equilibrium point from any initial condition. To prepare the data for learning DT models, we fitted splines to the trajectories and re-sampled the datapoints at a uniform time interval. The state vector was chosen to be $\tilde x_t = [y_t^\top, \dot y_{t}^\top]^\top \in \rea^4$, where $y_t$ and $\dot y_t$ are the position and velocity vectors at time $t$. All data was scaled to the range $[-1, 1]$ before training. For each shape in the dataset, we performed leave-one-out cross validation. Test trajectories are plotted in Figure \ref{fig:lasa_simulations} as solid black lines for a subset of the shapes in the dataset. 

The metric we used to compare different methods was normalized simulation error (NSE), defined as:
\begin{equation}
    NSE = \frac{\sum_{t=0}^T \vecnorm*{\hat x_t - \tilde x_t}^2_2}{\sum_{t=0}^T \vecnorm*{\tilde x_t}^2_2},
\end{equation}
where $\{\hat x\}^T_{t=0} $ is the simulated trajectory using the learned model, and $\{\tilde x\}^T_{t=0}$ is the true trajectory. The aim of our comparisons was to evaluate our framework against prior methods for learning Koopman models. The key differences of some recent frameworks are summarized in Table \ref{table:compare_methods}. We did not compare against \cite{LuschEtAl2018Deep} as they assume some prior knowledge about the spectrum of the system, which differs from the problem setting we consider. Due to space constraints, we leave comparisons of CT frameworks to future work.

In the following, we refer to our framework as SKEL (Stable Koopman Embedding Learning). 

\subsection{Comparison with other Koopman matrix parameterizations}

We compared SKEL against two recently-proposed Koopman learning frameworks, namely SOC \cite{MamakoukasEtAl2020Memory} and LKIS \cite{TakeishiEtAl2017Learning}. In particular, we compared our unconstrained stable parameterization of the Koopman operator against a constrained stable parameterization (SOC), and a unconstrained parameterization without stability guarantees (LKIS). 

The SOC parameterization is given by $\cala = S^{-1}OCS,$
where $O$ is orthogonal and $C$ is positive-semidefinite with $\norm{C} \leq 1$. A projected gradient descent method was used to solve the optimization problem. The LKIS parameterization is
$    \cala = Y_1 Y_2^\dag,$
where $Y_1$ and $Y_2$ are as defined in Equation \eqref{eqn:dmd_solution}, with parametric $\phi(x)$.

To make it a fair comparison, we kept all other aspects of the optimization problem the same, i.e. using simulation error as the optimization objective and using parametric observables of the form \eqref{eqn:phi_param}. We were interested mainly in comparing parameterizations of the Koopman operator as our framework is agnostic to choice of objective and observables, and these choices often depend on the particular application.

All instances of $\varphi(x)$ were fully-connected feedforward neural networks with ReLU (rectified linear units) activation functions, 2 hidden layers with 50 nodes each and an output dimensionality of 20. Hyperparameter values were chosen to be $\alpha = 10^3$ and $\epsilon = 10^{-8}$.

A boxplot of the normalized simulation error for the three methods is shown in Figure \ref{fig:skel_comparison}. It is clear that SKEL achieves the lowest median NSE on the test set with 95\% confidence. From Figure \ref{fig:losses}, it can be seen that LKIS actually attains the lowest training error, but does not generalize to the test set as well as SKEL. This can be seen as a symptom of overfitting, and shows that the stability guarantees of SKEL have a regularizing effect on the model. With regards to SOC, we observed that the constrained optimization problem would often converge to poor local minima, which is reflected in the relatively high training and test errors.

\begin{figure}
    \centering
    \includegraphics[width=0.45\textwidth]{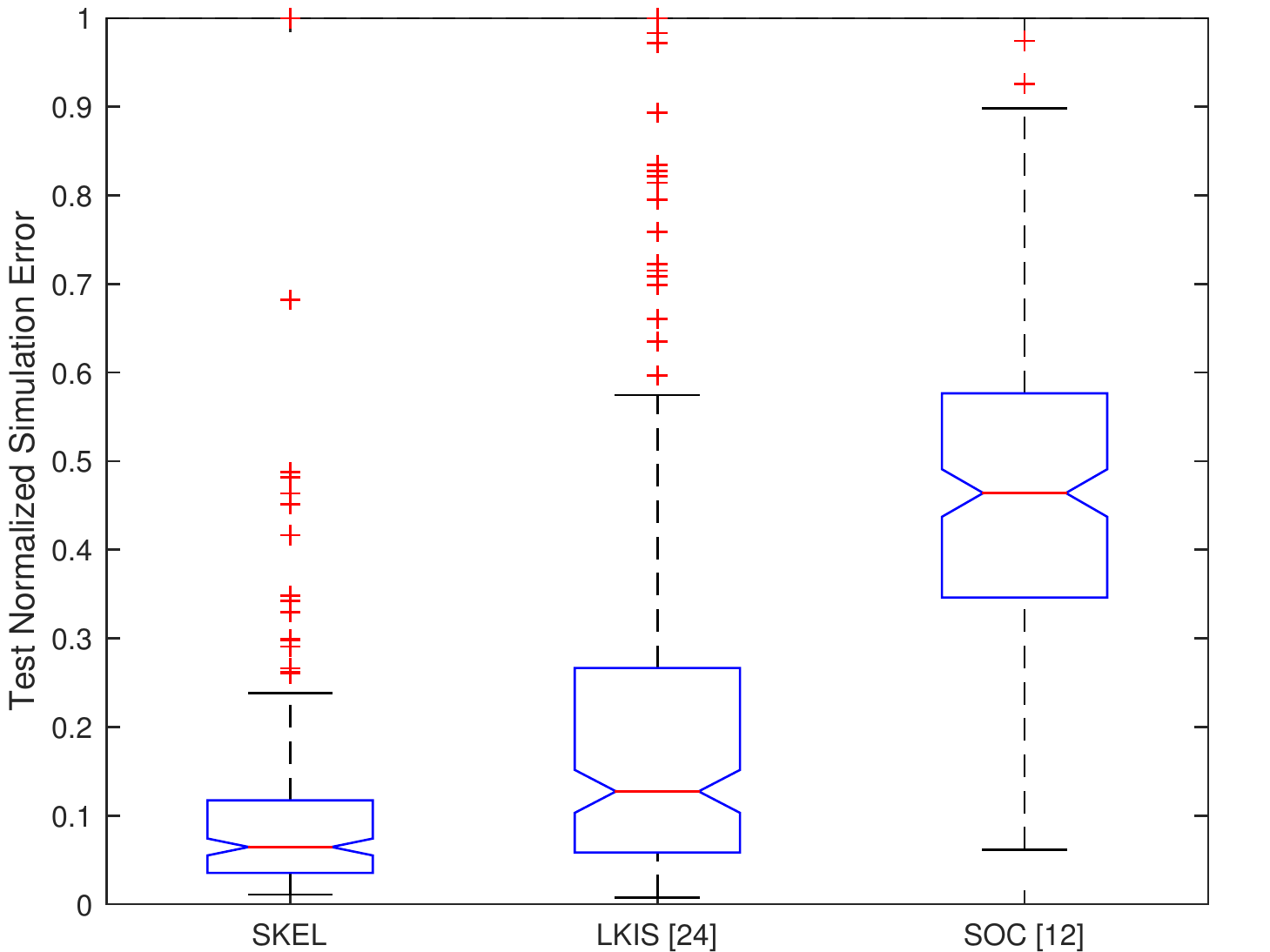}
    \caption{Comparison of SKEL with other Koopman learning methods. Outliers were clipped for better visibility of boxes. Number of outliers with NSE $>1$ from left to right: 1 (SKEL), 15 (LKIS), 0 (SOC).}
    \label{fig:skel_comparison}
\end{figure}

\begin{figure}
    \centering
    \includegraphics[width=0.45\textwidth]{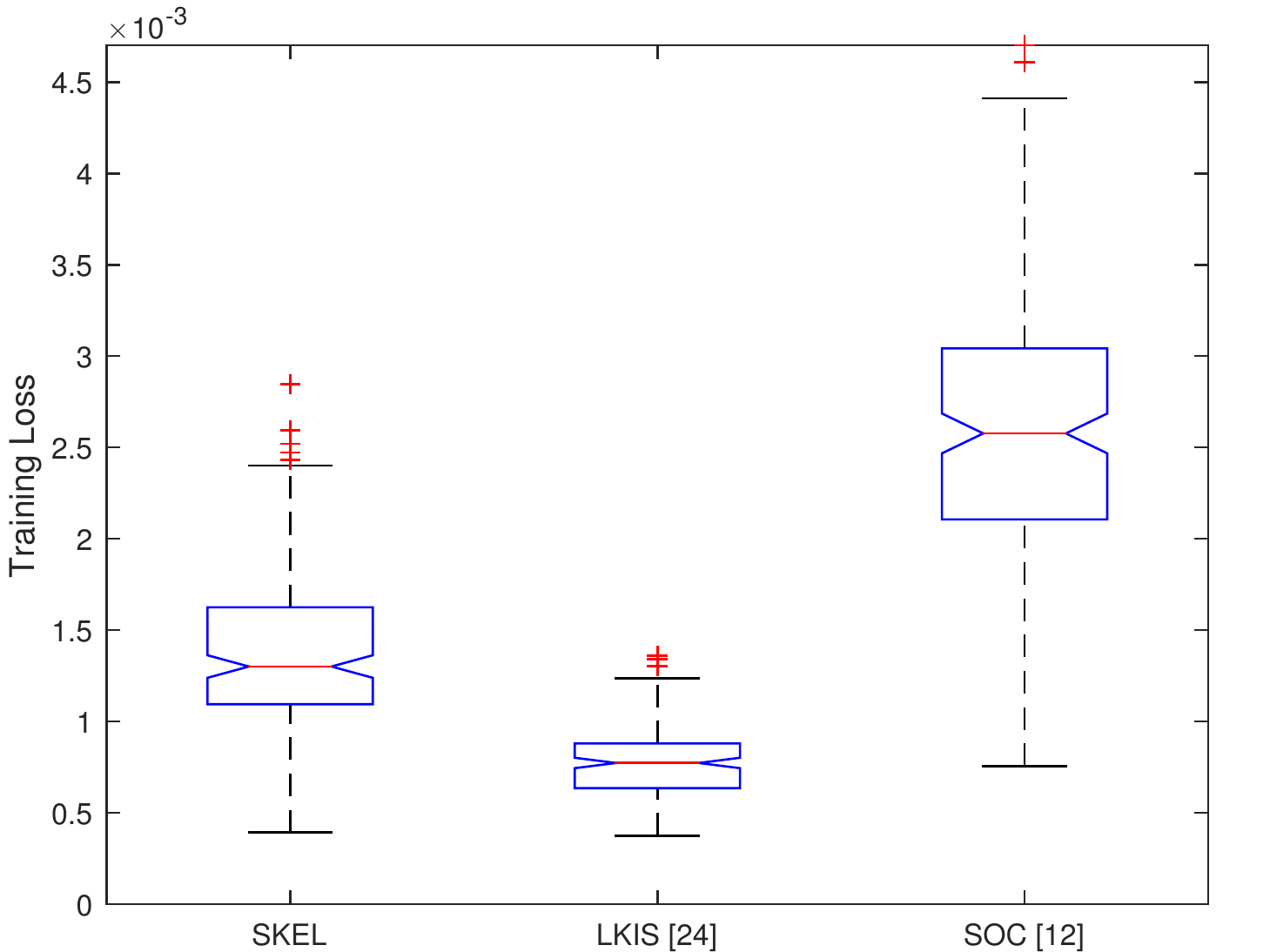}
    \caption{Training loss (Eq. \eqref{eqn:min_J_se}) for each method}
    \label{fig:losses}
\end{figure}

\begin{figure*}
    \centering
    \begin{subfigure}{0.48\textwidth}
        \centering
        \includegraphics[width=\textwidth]{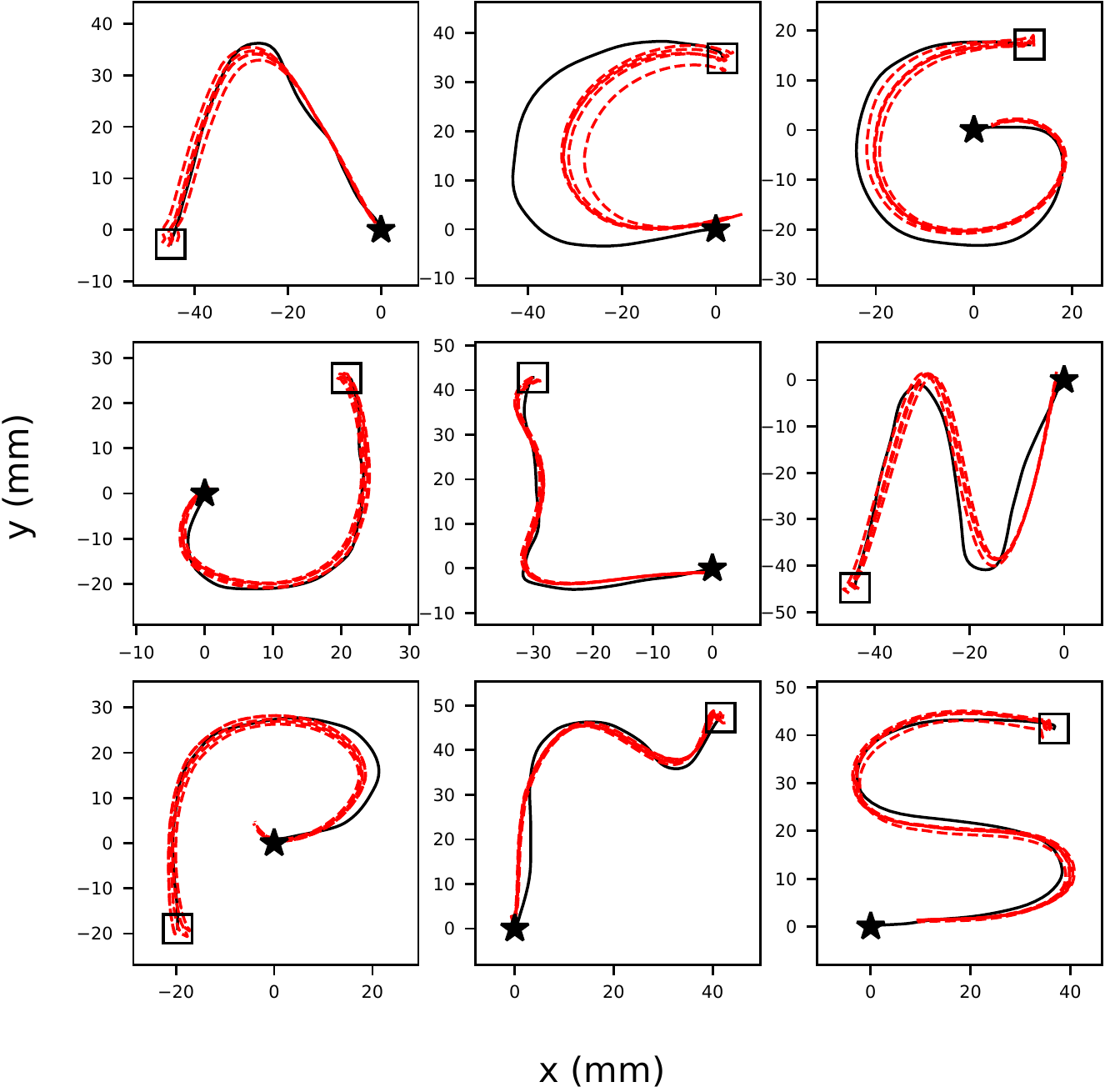}
        \caption{SKEL (ours)}
        \label{fig:skel_sim}
    \end{subfigure}
    \hfill
    \begin{subfigure}{0.48\textwidth}
        \centering
        \includegraphics[width=\textwidth]{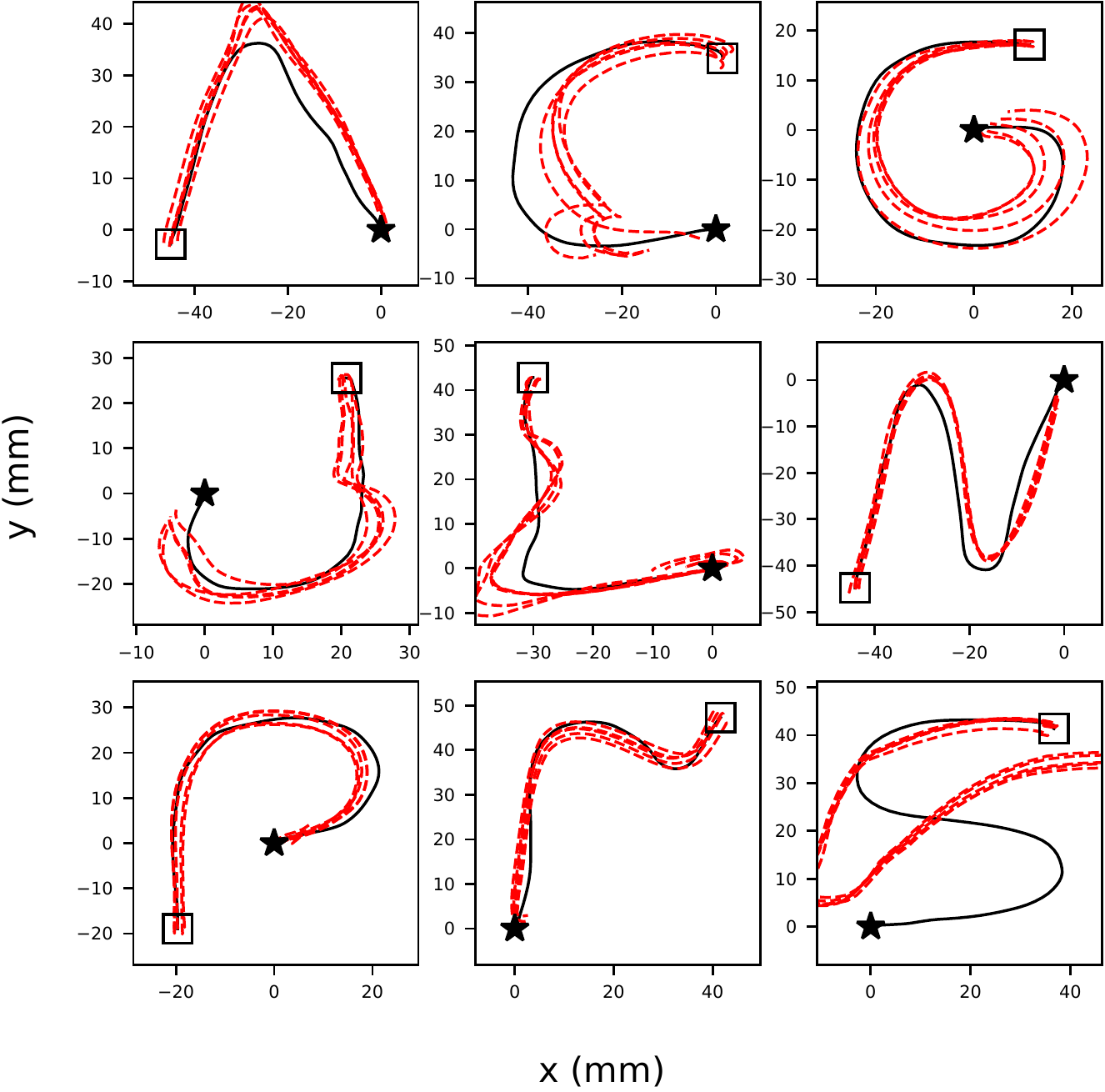}
        \caption{LKIS \cite{TakeishiEtAl2017Learning}}
        \label{fig:lkis_sim}
    \end{subfigure}
\caption{Simulations of SKEL and LKIS models on test data. Trajectories from the models are shown 
as red dotted lines, while the true trajectory is shown as a solid black line. Initial conditions were sampled from a square region of width 2mm centered at the start point of the true trajectory. The target point is marked by a black star.}
\label{fig:lasa_simulations}
\end{figure*}

\subsection{Robustness to perturbations in initial condition}
We performed a qualitative evaluation of the robustness of the models to small perturbations in the initial condition of the test trajectory. We compared only SKEL and LKIS as it was clear from Figure \ref{fig:skel_comparison} that SOC underperformed in this setting. The results are plotted in Figure \ref{fig:lasa_simulations}. It can be seen that the SKEL models produce trajectories that converge to each other due to their contracting property, whereas the LKIS models behave unpredictably, indicating instability of the learned model.

\section{Conclusion}
We have presented a novel Koopman learning framework that jointly models the Koopman operator and observables while guaranteeing model stability, via an unconstrained optimization problem. We showed that our framework outperforms existing Koopman methods on a real-world handwriting problem and achieves the lowest median simulation error. Further work can be done to extend this framework to controlled systems.

\bibliography{stable-koopman-bibtex}
\bibliographystyle{abbrv}

\end{document}